\title{\LARGE \bf
Exact Formulas for Finite-Time Estimation Errors of Decentralized Temporal Difference Learning with Linear Function Approximation
}
\author{Xingang Guo and Bin Hu
\thanks{This work is generously supported by the NSF award
CAREER-2048168 and the 2020 Amazon research award.}
\thanks{Xingang Guo and Bin Hu are with the Coordinated Science Laboratory (CSL) and the Department of Electrical and Computer Engineering, 
        University of Illinois at Urbana-Champaign. Email:
        {\tt\small \{xingang2,~binhu7\}@illinois.edu}}
}
\newtheorem{theorem}{Theorem}
\newtheorem{corollary}{Corollary}
\newtheorem{remark}{Remark}
\newtheorem{assumption}{Assumption}
\newtheorem{fact}{Fact}
\newcommand{\norm}[1]{\left\|#1\right\|}
\newcommand{\field}[1]{\mathbb{#1}}
\newcommand{\R}{\field{R}}
\newcommand{\vQ}{\hat{\textbf{Q}}}
\newcommand{\bmtx}{\begin{bmatrix}}
\newcommand{\emtx}{\end{bmatrix}}
\newcommand{\bsmtx}{\left[ \begin{smallmatrix}} 
\newcommand{\esmtx}{\end{smallmatrix} \right]} 
\newcommand{\bmatarray}[1]{\left[\begin{array}{#1}}
\newcommand{\ematarray}{\end{array}\right]}
\newcommand{\N}{\mathcal{N}}
\newcommand{\M}{\mathcal{M}}
\newcommand{\E}{\mathcal{E}}
\newcommand{\G}{\mathcal{G}}
\newcommand{\tp}{\mathsf{T}}
\newcommand{\bmat}[1]{\begin{bmatrix}#1\end{bmatrix}}
\DeclareMathOperator{\vect}{vec}
\DeclareMathOperator{\diag}{diag}
\DeclareMathOperator{\sym}{sym}
\DeclareMathOperator{\real}{real}
\begin{document}

\maketitle
\thispagestyle{empty}
\pagestyle{empty}

\begin{abstract}
In this paper, we consider the  policy evaluation problem in multi-agent reinforcement learning (MARL) and derive exact closed-form formulas for the finite-time mean-squared estimation errors of  decentralized temporal difference (TD) learning with linear function approximation.
Our analysis hinges upon the fact that the decentralized TD learning method can be viewed 
as a Markov jump linear system (MJLS). Then standard MJLS theory can be applied to quantify the mean and covariance matrix of the estimation error of the decentralized TD method at every time step. Various implications of our exact formulas on the algorithm performance are also discussed. 
An interesting finding is that under a necessary and sufficient stability condition, the mean-squared TD estimation error will converge to an exact limit at a specific exponential rate.
\end{abstract}

\section{INTRODUCTION}


Reinforcement Learning (RL) provides a general paradigm for solving sequential decision making tasks, and has received much research attention in recent years \cite{Puterman2014, sutton2018reinforcement,bertsekas1996neuro}. 
An important task in RL is the policy evaluation, which aims to estimate the value function for any given policy.  Temporal difference (TD) learning combined with various function approximators has been widely used for model-free policy evaluation \cite{sutton1988learning, dann2014policy}.
The asymptotic behaviors of TD learning are well understood via applying the ordinary differential equation (ODE) method\cite{ borkar2009stochastic,TsiRoy1997,borkar2000ode}. 
 Recently, there has been a growing interest in  finite-time analysis of TD learning with linear function approximation in various settings \cite{dalal2018finite,bhandari2018finite,srikant2019finite,hu2019characterizing,xu2019two}.


In this work, we focus on the multi-agent reinforcement learning (MARL) setting \cite{zhang2021multi}, and study the finite-time behaviors of decentralized TD learning\cite{Mathkar2016}.
To perform multi-agent policy evaluation, 
a group of agents will cooperate 
to learn the global value function via exchanging local information over a communication network.
Specifically,
each agent can observe the global state of the shared environment, and execute control actions based on a local policy. Then each agent will receive local rewards, and
collaborate over the network to evaluate the global value function.
The idea of decentralized TD learning is that the agents can share their local TD estimates with neighbors and then reach a consensus for a good estimate for the global value function. 

The asymptotic convergence of decentralized TD learning is well understood \cite{Mathkar2016}. More recently, several upper bounds for the finite-time mean-squared estimation errors of decentralized TD learning have been obtained under a variety of assumptions \cite{Doan2019, Sun2020, Doan2021,Zeng2021,Wang2020}. Specifically, the IID noise case was covered in \cite{Doan2019}, and the more general Markov noise case has been addressed in \cite{Sun2020, Doan2021,Zeng2021,Wang2020}.
To complement these existing upper bounds, our paper presents new exact formulas for finite-time mean-squared estimation errors of decentralized TD learning with linear function approximation. We adopt the setup in \cite{Doan2021} where the Markov noise is considered  and the projection in TD updates is removed.
We view the decentralized TD learning method as a Markovian jump linear system (MJLS), and apply standard results in the MJLS theory \cite{costa2006} to quantify the finite-time estimation errors
exactly. 
Various implications of our exact formulas on the algorithm performance are also discussed.  
One important finding is that under a necessary and sufficient stability condition, the mean-squared TD estimation error will converge to an exact limit at a specific exponential rate.
We also apply perturbation analysis to characterize how the learning rate choice will affect the algorithm performance.


It is worth mentioning that our work is inspired by a recent line of research
on control-oriented analysis for iterative learning/optimization algorithms \cite{Lessard2014,hu17a,fazlyab2017analysis,hu17b,sundararajan2017robust,hu2018dissipativity,seidman2019control,mohammadi2020robustness,sundararajan2020analysis,hu2021analysis,gannot2021frequency,lee2019unified,Guo2022}, and can be viewed as an extension of  \cite{hu2019characterizing}, which applies the MJLS theory to analyze the centralized TD learning algorithms.

\section{PRELIMINARIES}

\subsection{Notation}
 The set of $n$-dimensional real vectors is denoted as $\R^n$. Let $\bm{1}_n \in \R^n$ be a vector whose elements are all 1. We denote the $n\times n$ identity matrix as $I_n$. The kronecker product of two matrices $A$ and $B$ is denoted as $A \otimes B$. Let $\vect$ denote the standard vectorization operation that stacks the columns of a matrix into a vector. 
Let
$\sym$ denote the symmetrization operation,  
 We use $\diag(H_i)$ to denote a matrix whose $(i,i)$-th block is $H_i$ and all other blocks are zero. 
The spectral radius of a square matrix $H$ is denoted as $\sigma(H)$. Clearly, $H$ is Schur stable if $\sigma(H) < 1$.  The eigenvalue with the largest magnitude of $H$ is denoted as $\lambda_{\max} (H)$. The eigenvalue with the largest real part of $H$ is denoted as $\lambda_{\max \real} (H)$.

\subsection{Multi-agent reinforcement learning}
In this paper, we consider the policy evaluation problem in  multi-agent reinforcement learning. Specifically, $M$ agents will cooperate over a communication network $\mathcal{G}$ to compute the value function for a multi-agent Markov decision process (MDP) in a shared environment.
The multi-agent MDP is described by the following tuple
$$\left( \mathcal{S}, \{ \mathcal{A}_m \}_{m=1}^M, P, \{ R_m\}_{m=1}^M, \gamma, \mathcal{G} \right)$$
where $\mathcal{S}$ is a finite set of global states shared by all the agents,  $\mathcal{A}_m$ is a finite set of actions available to agent $m$, $P$ is the global transition kernel for the shared environment, $R_m$ is the local immediate reward observed by agent $m$, $\gamma$ is the discount factor, and $\mathcal{G}$ is the communication network. At every time step $k$, each agent $m$ will observe the global state $s^k\in \mathcal{S}$ of the shared environment, and then take  an  action $a_m^k\in \mathcal{A}_m$ based on a local policy $\pi_m$.
As a consequence of the joint actions of all the agents, the shared environment will transit to a new state $s^{k+1}\in \mathcal{S}$ . In addition, each agent $m$ will also receive a reward $R_m(s^k,s^{k+1})$ which is only revealed locally.\footnote{At step $k$, the reward $R_m$ will actually depend on $s^k$, $a_m^k$, and $s^{k+1}$. Since the local policy $\pi_m$ does not change over time, we slightly abuse our notation by using $R_m(s^k, s^{k+1})$ to denote the reward under policy $\pi_m$.}
We
emphasize that there is no centralized policy that can access all the action/reward information. 
The agents can only communicate with each other through the network $\mathcal{G} = (\M, \E)$, where $\M:=\{1,2,\ldots,M\}$ is the vertex set, and $\mathcal{E} := \mathcal{V} \times \mathcal{V}$ represents the edge set. Let $\N_m \subset \M$ denote the neighbor(s) of agent $m\in \M$. 

For multi-agent policy evaluation, the agents will cooperate over the network $\mathcal{G}$  to compute the so-called value function which is defined to be the following expected sums of discounted rewards:
\begin{equation}
V_\G(s) = \mathbb{E} \left[ \frac{1}{M} \sum_{m \in \M} \sum_{k=0}^\infty \gamma^k R_m(s^k,s^{k+1}) | s(0) = s \right].
\end{equation}
One can show that the value function $V_\G(s) $ satisfies the following multi-agent Bellman equation:
\begin{equation}
V_\G(s) =  \sum_{s' \in \mathcal{S}} {P}_{ss'}  \left[ \frac{1}{M}\sum_{m \in \M} R_m(s,s') + \gamma V_\G(s')  \right].
\end{equation}
where ${P}_{ss'}$ denotes the transition probability from the current state $s$ to the next state $s'$ under the stationary policies $\{\pi_m\}_{m=1}^M$. For many applications, the transition model is unknown, and the multi-agent Bellman equation cannot be directly solved. Next, we will review the decentralized temporal difference (TD) learning which can be used for model-free policy evaluation.


\subsection{Decentralized TD(0) with linear function approximation}

When the size of the state space $\mathcal{S}$ is very large, exact computation of $V_\G$ for all $s\in \mathcal{S}$ will be intractable. In this paper, the linear function approximation is considered, and the value function will be estimated as $V_\G(s)\approx  {\phi}^\tp(s) \theta$, where $\phi$ is some pre-selected feature vector, and $\theta\in \R^p$ is the weight to be determined.
Then a good estimator for the value function can be obtained by finding the optimal weight $\theta^*$ that minimizes the so-called projected Bellman error.

In the decentralized setting, the reward/action information is kept locally, and the agents have to cooperate over the communication network for finding $\theta^*$.  The idea of decentralized TD learning is that the agents can just share their local TD estimates of $\theta^*$ with their neighbors via the communication network $\mathcal{G}$ and then reach a consensus for a global estimate. The network topology is captured by the weighted adjacency matrix $W$. Let the $mm'$-th entry of $W$ be denoted as $W_{mm'}$. Note that  
$W$ is set to satisfy  $W_{mm'} > 0$ for $m' \in \N_m$, and $W_{mm'} = 0$, otherwise. Then the agents can share their local TD estimates according to $W$.

Now we formalize the decentralized TD(0) method, and a pseudo code is provided as in Algorithm \ref{decentralized_TD}.
 Each agent $m$ updates the local weight $\theta_m^k$ as a estimate of $\theta^*$. At every iteration, each agent $m$ first exchanges its estimation with the neighbors in $\N_m$, and then make the following update:
\begin{align} \label{local_ite}
    \theta_m^{k+1} = \sum_{m'\in \M} W_{mm'} \theta_{m'}^{k} + \alpha  {\phi}(s^k) d^k, 
\end{align}  
where $\alpha$ is the learning rate, $W_{mm'} \in [0,1]$ is the network weight for the edge $(m,m')$, and $d^k$ is given by
\begin{align}
    d^k=(\gamma {\phi}(s^{k+1})-   {\phi}(s^k) )^\tp  \theta_m^{k} + R_m(s^k,s^{k+1}).
\end{align}
The above algorithm combines TD learning with consensus.
It is expected that $\theta_m^k$  will converge to some neighborhood around $\theta^*$ if the learning rate is properly chosen. 

\begin{algorithm}[t!] \label{decentralized_TD}
\SetAlgoLined
\textbf{Input}: $\alpha > 0$, $\phi(s)\,\,  \forall ~ s \in \mathcal{S}$, $W$, $\gamma$  \\
\textbf{Initialization}: $\{\theta_m(0) \}_{m\in \M}$  \\
\textbf{Iteration}:\\
For $k = 0,1,\cdots$, agent $m \in \M$ implements  \\ 
a. Exchange $\theta_m^k$ with agent $m' \in \N_m$ \\
b. Observe $s^k$,$s^{k+1}$, and $R_m(s^k,s^{k+1})$ \\
c. Update the weight:
\begin{align*}
d^k&=(\gamma {\phi}(s^{k+1})-   {\phi}(s^k) )^\tp  \theta_m^{k} + R_m(s^k,s^{k+1}) \\
\theta_m^{k+1} &= \sum_{m'\in \M} W_{mm'} \theta_{m'}^{k} + \alpha  {\phi}(s^k)d^k.
\end{align*} 
 \caption{Decentralized TD(0) Algorithm}
\end{algorithm}

\subsection{Problem statement}
In this paper, we are interested in exact analysis of the finite-time estimation error $\frac{1}{M}\sum_{m=1}^M\mathbb{E}\norm{\theta_m^k-\theta^*}^2$ for the above decentralized TD(0) method. We will present closed-form analytical formulas to quantifying such TD estimation errors and discuss the implications for algorithm performance and design.
Our analysis requires some standard assumptions used in the literature \cite{Sun2020, Doan2021,Zeng2021,Wang2020}. First, we adopt the following assumption on the underlying communication structure.

\begin{assumption} \label{ass_W}
The communication network is connected and undirected. The matrix $W$ is doubly stochastic, i.e., $\sum_{m=1}^M W_{mm'} = 1$ for all $m'$, and $\sum_{m' = 1}^M W_{mm'} = 1$ for all~$m$.
\end{assumption}

Recall that $\theta^*$ is the solution to the projected multi-agent Bellman equation. To ensure the existence and uniqueness of $\theta^*$, the following standard assumption is required.

\begin{assumption} \label{P}
The Markov chain $\{ s^k \}$  is irreducible and aperiodic\footnote{Since the policies $\{\pi_m\}_{m=1}^M$ have been fixed over time, the random process $\{s^k\}$ just becomes a Markov chain}.
All feature vectors  are linearly independent.  
\end{assumption}

\section{Main Analysis Framework via MJLS Theory}

\subsection{Connections between decentralized TD(0) and MJLS}
Markov jump linear systems have been extensively studied in the controls literature \cite{costa2006}. Typically, a MJLS is governed by a state-space model in the following form:
\begin{align}\label{MJLS1}
    \xi^{k+1}=H(z^k) \xi^k+ G(z^k) u^k,
\end{align}
where $\xi^k$ is the state, $u^k$ is the input, and $z^k$ is the so-called jump parameter sampled from a Markov chain. 
In this section, we show that the decentralized TD(0) method \eqref{local_ite} can be viewed as a special case of \eqref{MJLS1} such that existing analysis tools from the MJLS theory \cite{costa2006} can be readily applied. To rewrite \eqref{local_ite} as a MJLS, we can first augment $\bmat{(s^{k+1})^\tp \,\, (s^k)^\tp}^\tp\in \mathcal{S}\oplus\mathcal{S}$ as a new vector~$z^k$. We set $n:=|\mathcal{S}|^2$ , and then there is a one-to-one mapping from $\mathcal{S}\oplus \mathcal{S}$ to the set $\mathcal{N}=\{1,2,\cdots, n\}$. Without loss of generality, $\{z^k\}$ can be set up as a Markov chain sampled from $\mathcal{N}$.
 Given any $z^k$, we define  $A(z^k)$ and $b(z^k)$ as follows:
\begin{align}
    &A(z^k) =  {\phi}(s^k) (\gamma  {\phi}(s^{k+1})-   {\phi}(s^k) )^\tp, \\
    &b_m(z^k) =  R_m(s^k,s^{k+1}) {\phi}(s^k)  .
\end{align}
Therefore, we can rewrite \eqref{local_ite} as 
\begin{equation} 
    \theta_m^{k+1} = \sum_{m'\in \M} W_{mm'} \theta_{m'}^{k} + \alpha \left(A(z^k) \theta_{m}^{k} + b_m(z^k)\right), \label{local_it}
\end{equation} 
Next, we define the following two matrices\footnote{To ease the application of the MJLS theory, our definitions are slightly different from the ones used in \cite{Doan2021,Sun2020}.}:
\begin{align*}
&\Theta := \bmat{  \theta_1  &\theta_2 &\cdots &\theta_M } \in \R^{p\times M}, \\
&B(z^k) := \bmat{  b_1(z^k) & b_2(z^k) &\cdots &b_M(z^k)} \in \R^{p\times M}.
\end{align*}
Then, the update rule \eqref{local_it} can be compactly rewritten as:
\begin{equation} \label{all_agent_update}
\Theta^{k+1} = \alpha A(z^k) \Theta^{k}  + \Theta^{k}  {W}^\tp + \alpha {B}(z^k).
\end{equation}
Now it becomes obvious that we can just vectorize \eqref{all_agent_update} to get a MJLS with $z^k$ being the jump parameter.

To analyze the TD estimation error in \eqref{all_agent_update}, some characterization for $\theta^*$ is needed.
Assumption \ref{P} implies that the Markov chain $\{z^k\}$ admits a unique stationary distribution with only positive entries. In addition, there exists a matrix $\bar{A}$ and vectors $\bar{b}_m$ (for all $m\in \M$) such that:
\begin{align}
   \lim_{k\rightarrow \infty} \mathbb{E}(A(z^k)) = \bar{A} , \,\,\, \lim_{k\rightarrow \infty}\mathbb{E}(b_m(z^k))  = \bar{b}_m.
\end{align}
It can be further shown that all the eigenvalues of $\bar{A}$ have strictly negative real parts. i.e., $\bar{A}$ is Hurwitz \cite{TsiRoy1997}.
Let $\bar{\mathbf{b}} =  \frac{1}{M} \sum_{m=1}^M \bar{b}_m$. Consequently, the optimal weight $\theta^*$ exists and has to be the unique solution to the equation $\bar{A} \theta^* +  \bar{\mathbf{b}}= 0$. See \cite{Sun2020, Doan2021,Zeng2021,Wang2020} for more explanations.
Now we can define:
\begin{equation}
\Theta^* := \bmat{  \theta^*  &\theta^* &\cdots &\theta^* } \in \R^{p\times M}.
\end{equation}
Denoting $\Psi^k=\Theta^{k} - \Theta^*$, we can rewrite \eqref{all_agent_update} as follows:
\begin{equation} \label{all_agent_update2}
\Psi^{k+1}  = \alpha A(z^k) \Psi^{k}  + \Psi^{k}  {W}^\tp + \alpha ( {B}(z^k) + A(z^k)\Theta^*).
\end{equation}
We can vectorize \eqref{all_agent_update2} and obtain 
\begin{align} 
\vect\left(\Psi^{k+1}\right) &= (I_M \otimes (\alpha A(z^k))  +    {W} \otimes I_p ) \vect(\Psi^k)   \nonumber \\
&+ \alpha \vect\left( {B}(z^k)+ A(z^k)\Theta^* \right), \label{all_agent_update3}
\end{align}
which is a special case of the MJLS model \eqref{MJLS1}. If we set $n_\xi=Mp$ and denote $\xi^k=\vect(\Psi^k) \in \R^{n_\xi}$, then \eqref{all_agent_update3} is equivalent to 
\begin{align}
\label{eq:DTD_MJLS}
\xi^{k+1} = H(z^k) \xi^k + G(z^k),
\end{align}
where $H(z^k)\in \R^{n_\xi \times n_\xi}$ and $G(z^k)\in \R^{n_\xi}$ are specified as
\begin{align*}
   H(z^k)  &=  \alpha I_M \otimes A(z^k)   +    {W} \otimes I_p, \\
   G(z^k) &= \alpha \vect\left( {B}(z^k)+ A(z^k)\Theta^* \right).
\end{align*}
Clearly, \eqref{eq:DTD_MJLS} is a special case of \eqref{MJLS1} with $u^k=1$ for all~$k$. At every iteration, the jump parameter $z^k\in \mathcal{N}$ is sampled from the underlying Markov chain. 
When $z^k=i$, we denote $H(z^k)=H_i$ and $G(z^k)=G_i$. Obviously, we have $H(z^k)\in\{H_i\}_{i=1}^n$ and $G(z^k)\in \{G_i\}_{i=1}^n$ for all $k$.

It is straightforward to verify that the mean-squared estimation error for the decentralized TD(0) method satisfies
\begin{align*}
   \frac{1}{M}\sum_{m=1}^M \mathbb{E}\norm{\theta_m^k-\theta^*}^2=\frac{1}{M}\mathbb{E}\norm{\vect(\Psi^k)}^2=\frac{1}{M}\mathbb{E}\norm{\xi^k}^2.
\end{align*}
For convenience, we denote $\delta^k:=\frac{1}{M}\mathbb{E}\norm{\xi^k}^2$.
In the existing literature \cite{Doan2021, Sun2020, Zeng2021}, there are several
upper bounds for $\delta^k$. 
Next, we will show how to apply well-known results from the MJLS theory~\cite{costa2006} to obtain exact formulas for $\delta^k$.



\subsection{Exact formulas for finite-time estimation errors}

Now 
we apply standard MJLS theory \cite[Proposition 3.35]{costa2006} to analyze the decentralized TD learning scheme \eqref{eq:DTD_MJLS}. We will show that the mean and covariance of $\{\xi^k\}$ are governed by a simple LTI system. 

To apply the MJLS theory, 
we need the following notation:
$$q_i^k = \mathbb{E}(\xi^k\textbf{1}_{\{z^k = i\}} ), \qquad Q_i^k = \mathbb{E}(\xi^k (\xi^k)^\tp \textbf{1}_{\{z^k = i\}} ),$$
where $\textbf{1}_{\{z^k = i\}}$ is an indicator function defined as
 $\textbf{1}_{\{z^k = i\}}  = 1$ if $z^k = i$ and $\textbf{1}_{\{z^k = i\}} = 0$ otherwise. 
 Obvious, the mean and covariance of $\xi^k$ can be calculated as
$$\mathbb{E}(\xi^k) = \sum_{i=1}^n q_i^k, \qquad \mathbb{E}(\xi_k \xi_k^\tp) =  \sum_{i=1}^n Q_i^k.$$
Based on standard results in the MJLS theory \cite[Proposition 3.35]{costa2006}, we can calculate $q_j^k$ and $Q_j^k$ iteratively as follows:
\begin{align*}
q_j^{k+1} & = \sum_{i=1}^n p_{ij} (H_i q_i^k +  p_i^k G_i),\\ 
Q_j^{k+1} & = \sum_{i=1}^n p_{ij} (H_i Q_i^k H_i^\tp + 2 \sym(H_i q_i^k G_i^\tp) +  p_i^k G_i G_i^\tp),
\end{align*}
where  $p_{ij}:= \mathbb{P}(z^{k+1} = j | z^k = i)$, and
 $p_i^k:= \mathbb{P}(z^k = i)$. Recall that the mean-squared TD estimation error is defined as $\delta^k=\frac{1}{M}\mathbb{E}\norm{\xi^k}^2$.
Denoting $(q^k)^\tp:= \bmat{(q_1^k)^\tp & \cdots & (q_n^k)^\tp}$ and $\vQ^k:= \vect(\bmat{Q_1^k, \cdots, Q_n^k})$, and 
 we can just vectorize the above recursion and obtain the following simple LTI system:
 \begin{align} \label{eq:lti_key}
\bmat{q^{k+1} \\  \vQ^{k+1}} &= \bmat{\mathcal{H}_{11} &0 \\ \mathcal{H}_{21} &\mathcal{H}_{22}} \bmat{q^{k} \\  \vQ^{k}} +\bmat{u_q^k \\ u_Q^k},\\
\delta^k& = C_\delta \vQ^k,\label{eq:deltafor}
\end{align}
where $\mathcal{H}_{11}$, $\mathcal{H}_{21}$, $\mathcal{H}_{22}$, $C_\delta$, $u_q^k$, and $u_Q^k$ are given by
\begin{align*}
\begin{split}
\mathcal{H}_{11}&= \bmat{p_{11} H_1 & \ldots & p_{n1} H_n\\ \vdots & \ddots & \vdots\\ p_{1n} H_1 & \ldots & p_{nn} H_n},\\
\mathcal{H}_{22}&= \bmat{p_{11} H_1\otimes H_1 & \ldots & p_{n1} H_n\otimes H_n\\ \vdots & \ddots & \vdots\\ p_{1n} H_1\otimes H_1 & \ldots & p_{nn} H_n\otimes H_n},\\
\mathcal{H}_{21}&=\bmat{p_{11} S_1  & \ldots &  p_{n1} S_n,  \\ \vdots & \ddots & \vdots \\  p_{1n} S_1 &  \ldots & p_{nn} S_n },\\
C_\delta&=\frac{1}{M} (\bm{1}_n^\tp \otimes \vect(I_{n_\xi})^\tp),\\
u_q^k&= \bmat{p_{11} G_1  & \ldots &  p_{n1} G_n  \\ \vdots & \ddots & \vdots \\  p_{1n} G_1 &  \ldots & p_{nn} G_n }\bmat{p_1^k I_{n_\xi}\\ \vdots \\ p_n^k I_{n_\xi}},\\
u_Q^k&= \bmat{p_{11} G_1\otimes G_1  & \ldots &  p_{n1} G_n\otimes G_n  \\ \vdots & \ddots & \vdots \\  p_{1n} G_1\otimes G_1 &  \ldots & p_{nn} G_n\otimes G_n }\bmat{p_1^k I_{n_\xi^2}\\ \vdots \\ p_n^k I_{n_\xi^2}}.
\end{split}
 \end{align*}
Notice that the term $S_i$ is defined as $S_i=H_i\otimes G_i+G_i\otimes H_i$ for all $i\in \mathcal{N}$.
The LTI system representation \eqref{eq:lti_key} is quite standard for MJLS models \cite{costa2006,hu2019characterizing}.
 Based on \eqref{eq:lti_key},
the mean and covariance of $\{\xi^k\}$ can be exactly calculated as
\begin{align}
    q^k&=(\mathcal{H}_{11})^k q^0+\sum_{t=0}^{k-1} (\mathcal{H}_{11})^{k-1-t}u_q^t,\\
    \hat{\textbf{Q}}^k&=(\mathcal{H}_{22})^k \hat{\textbf{Q}}^0+\sum_{t=0}^{k-1} (\mathcal{H}_{22})^{k-1-t}(\mathcal{H}_{21} q^t+u_Q^t).\label{eq:Qfor}
\end{align}
This directly leads to the following result.
\begin{theorem}
The finite-time estimation error of decentralized TD(0) can be calculated as
\begin{align*}
  \delta^k=C_\delta(\mathcal{H}_{22})^k \hat{\textbf{Q}}^0+\sum_{t=0}^{k-1} C_\delta (\mathcal{H}_{22})^{k-1-t}(\mathcal{H}_{21} q^t+u_Q^t).
\end{align*}
\end{theorem}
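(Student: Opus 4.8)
The plan is to treat \eqref{eq:lti_key}--\eqref{eq:deltafor} as a linear time-invariant system with a scalar output and simply propagate its closed-form solution through the output map. The structural feature I would exploit is that the state matrix in \eqref{eq:lti_key} is block lower triangular, so the second-moment coordinates $\vQ^k$ obey the self-contained recursion $\vQ^{k+1}=\mathcal{H}_{22}\vQ^k+(\mathcal{H}_{21}q^k+u_Q^k)$, with the mean block $q^k$ entering only as an exogenous driving term. Since $\delta^k=C_\delta\vQ^k$ by \eqref{eq:deltafor}, the entire statement reduces to obtaining a closed form for $\vQ^k$ and then left-multiplying by $C_\delta$.

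First I would establish (or recall from \eqref{eq:Qfor}) the variation-of-constants formula for $\vQ^k$. Treating $\mathcal{H}_{21}q^t+u_Q^t$ as the input, a routine induction on $k$ unrolls the recursion into $\vQ^k=(\mathcal{H}_{22})^k\vQ^0+\sum_{t=0}^{k-1}(\mathcal{H}_{22})^{k-1-t}(\mathcal{H}_{21}q^t+u_Q^t)$. Second, I would confirm that the output map genuinely returns the mean-squared error: using $\mathbb{E}\norm{\xi^k}^2=\trace\big(\sum_i Q_i^k\big)=\sum_i\vect(I_{n_\xi})^\tp\vect(Q_i^k)$ together with $\vQ^k=\vect(\bmat{Q_1^k,\cdots,Q_n^k})$, one obtains exactly $C_\delta\vQ^k=\frac{1}{M}(\bone_n^\tp\otimes\vect(I_{n_\xi})^\tp)\vQ^k$, which matches the definition $\delta^k=\frac{1}{M}\mathbb{E}\norm{\xi^k}^2$. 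Third, I would substitute the closed form for $\vQ^k$ into $\delta^k=C_\delta\vQ^k$ and push $C_\delta$ through the finite sum by linearity, which reproduces the claimed identity verbatim.

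Since the difficult preparatory work---deriving the MJLS moment recursion from \cite[Proposition 3.35]{costa2006} and vectorizing it into the LTI form \eqref{eq:lti_key} via Kronecker identities such as $\vect(H_iQ_i^kH_i^\tp)=(H_i\otimes H_i)\vect(Q_i^k)$---has already been completed before the statement, the theorem itself follows by pure substitution and linearity. I therefore do not expect a genuine obstacle; the only point requiring care is bookkeeping, namely checking that the $k{-}1{-}t$ exponents and the block partitions of $\mathcal{H}_{21}$, $\mathcal{H}_{22}$, and $C_\delta$ are transcribed consistently, so that the factors $(\mathcal{H}_{22})^{k-1-t}(\mathcal{H}_{21}q^t+u_Q^t)$ have dimensions compatible with the $n\cdot n_\xi^2$-dimensional state $\vQ$. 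Should one wish to avoid quoting \eqref{eq:Qfor}, the identical result follows just as directly from the standard discrete-time convolution sum for an LTI system applied to the $\vQ$-block alone.
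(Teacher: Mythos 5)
Your proposal is correct and takes essentially the same route as the paper: the paper's proof is a one-line combination of \eqref{eq:Qfor} with \eqref{eq:deltafor}, which is exactly your substitution of the unrolled $\vQ$-recursion into the output map $\delta^k = C_\delta \vQ^k$. The additional bookkeeping you include (re-deriving \eqref{eq:Qfor} by induction from the block lower-triangular structure, and verifying via the trace/vec identity that $C_\delta \vQ^k = \frac{1}{M}\mathbb{E}\norm{\xi^k}^2$) is sound but already established in the paper's derivation preceding the theorem.
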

\begin{proof}
Combining \eqref{eq:Qfor} with \eqref{eq:deltafor} immediately leads to the desired conclusion.
\end{proof} 

Our formulas have several important implications which will be discussed later.
\begin{remark} \label{two_bounding}
Previous work on finite time analysis of decentralized TD(0) relied on the following decomposition~\cite{Doan2021}:
\begin{equation}
\theta_m^k - \theta^* = \underbrace{(\theta_m^k - \bar{\theta}^k )}_{\text{``consensus  error"}}+   \underbrace{(\bar{\theta}^k  - \theta^*)}_{\text{``optimality error"}}, 
\end{equation}
where $\bar{\theta}^k = \frac{1}{M}\sum_{m= 1}^M \theta_m^k$ is the average of the local TD estimates from all agents. Since $W$ is doubly stochastic, averaging \eqref{local_ite} over all $m$ leads to
$\bar{\theta}^{k+1} = \bar{\theta}^k + \alpha \left(A(z^k)  \bar{\theta}^{k} + \bar{b}(z^k) \right)$,
where $\bar{b}(z^k) = \frac{1}{M}\sum_{m=1}^M  b_m(z^k).$ It is obvious that the iterative process of $\{\bar{\theta}^k\}$ reduces to the ``single-agent" TD(0) scheme, whose finite-time behaviors have been well understood~\cite{srikant2019finite}. Existing work addressed the consensus error term separately, and various upper bounds for the mean-squared TD estimation errors have been obtained \cite{Doan2021, Sun2020, Zeng2021}. 
Using our MJLS approach, such a decomposition is not needed, and exact formulas for the TD estimation errors are obtained.
\end{remark}

\subsection{Implications for algorithm performance}

Now we discuss some implications of our exact formulas.

$\bullet$ \textbf{Stability}: The LTI system \eqref{eq:lti_key} is stable if and only if $\mathcal{H}_{22}$ is Schur stable.\footnote{By Proposition 3.6 in \cite{costa2006}, $\mathcal{H}_{11}$ is Schur stable  if $\mathcal{H}_{22}$ is Schur stable. Hence the stability of \eqref{eq:lti_key} is completely determined by $\sigma(\mathcal{H}_{22})$.} Notice that $\mathcal{H}_{22}$ depends on $W$ and $\alpha$. In the next section, we will show that we can choose sufficiently small $\alpha$ to achieve $\sigma(\mathcal{H}_{22})<1$ and ensure the stability of~\eqref{eq:lti_key}.

$\bullet$ \textbf{Steady-state estimation error}:
If $\sigma(\mathcal{H}_{22})<1$, then the system \eqref{eq:lti_key} is stable and the estimation error $\delta^k$ is guaranteed to converge to a stationary value. 
 To see this, notice that the Markov chain $\{z^k\}$ will converge to a stationary distribution geometrically fast under  Assumption~\ref{P}. Denote  $p^k := \bmat{p_1^k &p_2^k &\cdots &p_n^k}^\tp$ and $p^\infty:=\lim_{k\rightarrow \infty} p^k$.  Then the limits of $u_q^k$ and $u_Q^k$ also exist. We denote $u_q^\infty:=\lim_{k\rightarrow\infty} u_q^k$ and $u_Q^\infty:=\lim_{k\rightarrow\infty} u_Q^k$. We have
\begin{align*}
u_q^\infty&= \bmat{p_{11} G_1  & \ldots &  p_{n1} G_n  \\ \vdots & \ddots & \vdots \\  p_{1n} G_1 &  \ldots & p_{nn} G_n }\bmat{p_1^\infty I_{n_\xi}\\ \vdots \\ p_n^\infty I_{n_\xi}},\\
u_Q^\infty&= \bmat{p_{11} G_1\otimes G_1  & \ldots &  p_{n1} G_n\otimes G_n  \\ \vdots & \ddots & \vdots \\  p_{1n} G_1\otimes G_1 &  \ldots & p_{nn} G_n\otimes G_n }\bmat{p_1^\infty I_{n_\xi^2}\\ \vdots \\ p_n^\infty I_{n_\xi^2}}.
 \end{align*}
If  $\sigma(\mathcal{H}_{22})<1$, the system \eqref{eq:lti_key} is stable. Based on standard LTI results (e.g. Proposition 3 in  \cite{hu2019characterizing}), $(q^k, \vQ^k, \delta^k)$ will converge to some exact limit values which are given as
\begin{align*}
    q^\infty&=\lim_{k\rightarrow\infty} q^k=(I-\mathcal{H}_{11})^{-1} u_q^\infty,\\
    \vQ^\infty&=\lim_{k\rightarrow\infty} q^k=(I_{nn_\xi^2}-\mathcal{H}_{22})^{-1}(\mathcal{H}_{21}q^\infty+ u_Q^\infty),\\
    \delta^\infty&=\lim_{k\rightarrow\infty} \delta^k=C_\delta(I_{nn_\xi^2}-\mathcal{H}_{22})^{-1}(\mathcal{H}_{21}q^\infty+ u_Q^\infty).
\end{align*}
Our analysis characterizes the exact limit of $\delta^k$, while 
the existing results from \cite{Doan2021, Sun2020,Zeng2021} lead to various upper bounds on $\limsup_{k\rightarrow\infty}\delta^k$. 
Notice $q^\infty\neq 0$ in general.
In the next section, we will show $q^\infty=O(\alpha)$, $\vQ^\infty=O(\alpha)$, and $\delta^\infty=O(\alpha)$ for small $\alpha$ if Assumptions \ref{ass_W} and \ref{P} are given.


$\bullet$ \textbf{Convergence rate}: The convergence rate of $\delta^k$ can also be characterized using standard LTI theory. 
Based on Assumption \ref{P}, we have $\| p^k - p^\infty  \| \le c \tilde{\rho}^k$ for some $c$ and $0<\tilde{\rho} < 1$. Here $\tilde{\rho}$ is the mixing rate of $\{z^k\}$. 
A direct application of \cite[Proposition~3]{hu2019characterizing} leads to the following estimation error bound:
\begin{align} 
  \delta^\infty - C_1 \rho^k \le  &\delta^k \le \delta^\infty + C_1 \rho^k, \label{eq:bounds2}
\end{align}
where  
$\rho := \max\{ \sigma(\mathcal{H}_{11}) + \varepsilon, \sigma(\mathcal{H}_{22}) + \varepsilon, \tilde{\rho} \} < 1$ captures the convergence rate, and $C_1$ is some constant.
Here $\varepsilon$ can be any arbitrarily small positive number.  Clearly, the convergence rate $\rho$ depends on $\sigma(\mathcal{H}_{11})$, $\sigma(\mathcal{H}_{22})$, and $\tilde{\rho}$. When $\tilde{\rho}$ is the dominating rate, increasing $\alpha$ may not improve the convergence speed. However, $\sigma(\mathcal{H}_{11})$ will eventually becomes the dominating term when $\alpha$ is small enough. It is also worth mentioning that $\sigma(\mathcal{H}_{11})$ and $\sigma(\mathcal{H}_{22})$ depend on $W$. This dependence characterizes how the network topology  will affect the convergence rate of the decentralized TD(0) method. More discussions on the dependence of $\rho$ on $\alpha$ will be given in the next section.

\section{Discussions on Learning Rate Tuning} 
\label{per_markov}


In this section, we will show that the following results hold for  small $\alpha$:
\begin{align}
\label{eq:approx1}
    &\sigma(\mathcal{H}_{22})=  1 +   2\real(\lambda_{\max \real} (\bar{A})) \alpha+ o(\alpha)<1,\\
    \label{eq:approx2}
    &\sigma(\mathcal{H}_{11})=  1 +   \real(\lambda_{\max \real} (\bar{A})) \alpha+ o(\alpha)<1, \\
    \label{eq:approx3}
    &\delta^\infty=O(\alpha).
\end{align}
Based on such perturbation analysis results, it is expected that one can decrease the learning rate $\alpha$ to stabilize the learning process and obtain a smaller steady-state estimation error $\delta^\infty$. However, decreasing $\alpha$ leads to a larger value of $\sigma(\mathcal{H}_{11})$, meaning that the convergence is slowed down. Such design trade-off is consistent with the upper bounds for $\delta^k$ in the existing literature.

The analysis in this section relies on the perturbation theory. For simplicity, we denote $A(z^k)=A_i$ and $B(z^k)=B_i$ when $z^k=i\in \mathcal{N}$. We also denote the transition matrix of $\{z^k\}$ as $P_z$. Hence the $(i,j)$-th entry of $P_z$ is equal to~$p_{ij}$.


\subsection{Eigenvalue perturbation analysis}

To show \eqref{eq:approx1} and \eqref{eq:approx2},
we will perform
eigenvalue perturbation analysis.  The following fact is useful.
\begin{fact}
Suppose $\lambda$ is a semisimple eigenvalue of $K_0$ with multiplicity $r$. 
Suppose $Y = \bmat{y_1^\tp & \cdots  & y_r^\tp}^\tp$ and 
$X = \bmat{x_1 &\cdots &x_r}$, where
$(y_1,\cdots,y_r)$ and 
$(x_1,\cdots,x_r)$ are chosen to be independent left and right eigenvectors of $K_0$ associated with eigenvalue $\lambda$ and satisfy $Y X = I_r$.
Then there are $r$ eigenvalues for the perturbed matrix $K_0+\alpha K_1$ yielding the first-order expansion $\lambda + \eta \alpha + o(\alpha)$ for small $\alpha$, where $\eta$ is an eigenvalue of the $r\times r$ matrix $Y K_1 X$. 
\end{fact}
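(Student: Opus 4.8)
The plan is to exploit the semisimplicity of $\lambda$ to block-diagonalize $K_0$, and then track the $r$ perturbed eigenvalues branching off from $\lambda$ via a Schur-complement argument. First I would complete the given eigenvector data into a full change of basis: let $X=\bmat{x_1 & \cdots & x_r}$ collect the right eigenvectors as columns, let $Y=\bmat{y_1^\tp & \cdots & y_r^\tp}^\tp$ collect the left eigenvectors as rows (with $YX=I_r$), and choose $X_2,Y_2$ so that $T=\bmat{X & X_2}$ is invertible with $T^{-1}=\bmat{Y\\ Y_2}$. Semisimplicity guarantees that the $\lambda$-eigenspace has dimension exactly $r$ and admits an invariant complement, so this basis puts $K_0$ into block-diagonal form $T^{-1}K_0 T=\diag(\lambda I_r, J)$, where $J$ does not have $\lambda$ as an eigenvalue.

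Next I would conjugate the perturbed matrix. Writing $T^{-1}(K_0+\alpha K_1)T$ blockwise produces diagonal blocks $\lambda I_r + \alpha Y K_1 X$ and $J+\alpha Y_2 K_1 X_2$, together with off-diagonal blocks $\alpha Y K_1 X_2$ and $\alpha Y_2 K_1 X$ that are each $O(\alpha)$. Since conjugation preserves the spectrum, the eigenvalues of $K_0+\alpha K_1$ are those of this matrix, and I only track the $r$ of them lying near $\lambda$. The crucial structural fact is that $J-\lambda I$ is invertible, so for $\mu$ close to $\lambda$ and $\alpha$ small the bottom-right block $J+\alpha Y_2 K_1 X_2 - \mu I$ remains invertible. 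Applying the Schur complement formula to the characteristic determinant factors off this nonvanishing block and reduces the eigenvalue condition near $\lambda$ to
\begin{equation*}
\det\left( (\lambda-\mu) I_r + \alpha\, Y K_1 X - \alpha^2 R(\mu,\alpha) \right)=0,
\end{equation*}
where $R(\mu,\alpha)=(Y K_1 X_2)(J+\alpha Y_2 K_1 X_2-\mu I)^{-1}(Y_2 K_1 X)$ is uniformly bounded there. The coupling between blocks therefore enters only at order $\alpha^2$. Substituting $\mu=\lambda+\alpha\nu$ and dividing out the common factor $\alpha^r$ turns this into $\det(Y K_1 X - \nu I_r - \alpha R)=0$; letting $\alpha\to 0$ forces $\nu$ to be a root of $\det(YK_1X-\nu I_r)$, i.e.\ an eigenvalue of $YK_1X$. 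Tracking the $r$ branches back yields $\mu=\lambda+\eta\alpha+o(\alpha)$ with $\eta\in\sigma(YK_1X)$.

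The step I expect to be the main obstacle is the final limiting argument when $YK_1X$ has repeated or defective eigenvalues: then the individual eigenvalue branches need not be analytic in $\alpha$ and may split at fractional rates, so I cannot simply invoke analytic perturbation of a simple eigenvalue. I would sidestep this by working at the level of the reduced characteristic polynomial rather than individual branches; its coefficients depend continuously (indeed analytically) on $\alpha$, so continuity of roots guarantees that the $r$ rescaled quantities $(\mu_j(\alpha)-\lambda)/\alpha$ converge, as a multiset, to the eigenvalues of $YK_1X$, which is precisely the first-order statement being claimed. The remaining verification, that $P:=XY$ is the genuine spectral projection onto the $\lambda$-eigenspace (idempotent with the correct range and kernel, which legitimizes the block-diagonalization), is a routine algebraic check using $YX=I_r$ and the eigenvector relations.
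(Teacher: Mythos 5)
Your proposal is correct in substance, but it necessarily takes a different route from the paper, because the paper does not prove this Fact at all: it is invoked as a well-known result, with a footnote pointing to the remark following Theorem 2.1 of Moro--Burke--Overton's work on first-order eigenvalue perturbation theory (Lidskii's theory), which treats the general, possibly defective case via Newton-diagram techniques. Your argument is a self-contained elementary derivation of exactly the semisimple case that the paper needs: block-diagonalize $K_0$ by $T=\bmat{X & X_2}$ where $X_2$ spans $\ker Y$ (invariant because $YK_0=\lambda Y$), factor the characteristic determinant through the Schur complement of the nonsingular block $J+\alpha Y_2K_1X_2-\mu I$, and rescale $\mu=\lambda+\alpha\nu$. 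What the citation buys the paper is brevity and access to a strictly more general theory (fractional-power expansions when $\lambda$ is not semisimple); what your approach buys is transparency, since everything reduces to linear algebra plus continuity of roots, and it makes visible \emph{why} the coupling between the eigenspace and its complement only enters at order $\alpha^2$. Two loose ends you should tighten to make it airtight: (i) before letting $\alpha\to 0$ you must know the rescaled branches $\nu_j(\alpha)=(\mu_j(\alpha)-\lambda)/\alpha$ stay bounded; this is immediate from your own Schur form, since $\mu_j-\lambda$ is an eigenvalue of $\alpha YK_1X-\alpha^2 R$, giving $\abs{\mu_j-\lambda}\le \alpha\left(\norm{YK_1X}+\alpha\norm{R}\right)$, but it deserves an explicit line; (ii) your ``reduced characteristic polynomial'' is not literally a polynomial in $\nu$, because $R$ depends on $\mu=\lambda+\alpha\nu$, so the multiset convergence of roots should be concluded either by a compactness argument on the bounded $\nu_j$ (every limit point annihilates $\det(YK_1X-\nu I_r)$ by continuity of the determinant) or by Hurwitz/Rouch\'{e} applied to the function $(\nu,\alpha)\mapsto\det\bigl(YK_1X-\nu I_r-\alpha R(\lambda+\alpha\nu,\alpha)\bigr)$, which is analytic near $\alpha=0$ on bounded $\nu$-sets. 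The deferred verification that $XY$ is the spectral projection is indeed routine: $YX=I_r$ gives idempotence, $\ker(XY)=\ker Y$ is $K_0$-invariant, and the dimension count forces the direct sum decomposition, so your choice of $T$ exists with top block of $T^{-1}$ equal to $Y$.
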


\vspace{0.05in}
Now we apply the above well-known fact\footnote{See the remark placed behind \cite[Theorem 2.1]{moro1997lidskii} for more explanations.} to analyze $\sigma(\mathcal{H}_{11})$ and $\sigma(\mathcal{H}_{22})$.

\noindent$\bullet$ \textbf{Analysis for $\sigma(\mathcal{H}_{11})$}: 
Let us specify $K_0$ and $K_1$ as
$$
K_0=P_z^\tp\otimes W\otimes I_p, \,\,K_1=(P_z^\tp\otimes I_{n_\xi})\diag(I_M\otimes A_i).
$$
Then we have $\mathcal{H}_{11}=K_0+\alpha K_1$.
From Assumptions \ref{ass_W} \& \ref{P}, we know that $\lambda_{\max}(K_0)=1$ is a semisimple eigenvalue of $K_0$ with multiplicity $p$. After examining the eigenvectors associated with $\lambda_{\max}(K_0)$, we choose $Y=\frac{1}{M}\mathbf{1}_n^\tp\otimes \mathbf{1}_M^\tp\otimes I_p$ and $X=p^\infty\otimes \textbf{1}_M\otimes I_p$ such that $YX=I_p$. We can verify 
$$
YK_1X=\frac{1}{M}\sum_{i=1}^n p_i^\infty(\mathbf{1}_M^\tp\otimes I_p)(I_M\otimes A_i)(\mathbf{1}_M\otimes I_p).
$$
After simplification, we get $YK_1 X=\sum_{i=1}^n p_i^\infty A_i=\bar{A}$. Therefore, we can obtain the following result:
\begin{align*}
\lambda_{\max} (\mathcal{H}_{11}) \approx  1 +   \lambda_{\max \real} (\bar{A}) \alpha+ o(\alpha),
\end{align*}
which directly leads to the perturbation formula \eqref{eq:approx1}.

\noindent$\bullet$ \textbf{Analysis for $\sigma(\mathcal{H}_{22})$}: To prove \eqref{eq:approx2}, we can just choose $K_0=P_z^\tp\otimes (W\otimes I_p)\otimes (W\otimes I_p)$ and set $K_1$ to be equal to the following matrix 
\begin{align*}
(P_z^\tp\otimes I_{n_\xi^2})\diag(I_M\otimes A_i \otimes W\otimes I_p+W\otimes I_p \otimes I_M\otimes A_i).
\end{align*}
Then we have $\mathcal{H}_{22}=K_0+\alpha K_1+O(\alpha^2)$. Under mild technical conditions, we can drop the second-order term~$O(\alpha^2)$. Based on Assumption \ref{ass_W} \& \ref{P}, we know $\lambda_{\max}(K_0)=1$ is a semisimple eigenvalue of $K_0$ with multiplicity $p^2$. We can choose $Y$ and $X$ as
\begin{align}
\begin{split}
\label{eq:XY2}
Y&=\frac{1}{M^2}\mathbf{1}_n^\tp\otimes \mathbf{1}_M^\tp\otimes I_p\otimes \mathbf{1}_M^\tp\otimes I_p,\\
X&=p^\infty\otimes \textbf{1}_M\otimes I_p\otimes \textbf{1}_M\otimes I_p.
\end{split}
\end{align}
Obviously, we have $YX=I_{p^2}$. It is also straightforward to verify $Y K_1 X=\bar{A}\otimes I_p+I_p\otimes \bar{A}$. Therefore, we have 
\begin{align*}
\lambda_{\max} (\mathcal{H}_{22}) \approx  1 +   2\lambda_{\max \real} (\bar{A}) \alpha+ o(\alpha),
\end{align*}
which leads to the perturbation result \eqref{eq:approx2}.





\subsection{Steady-state estimation error analysis}
To show \eqref{eq:approx3}, we will use the Laurent expansion of matrix inverse. Our analysis is formalized as follows.

\begin{corollary}
Under Assumptions \ref{ass_W} \& \ref{P},  the following result holds for sufficient small $\alpha$:
  \begin{align*}
  q^\infty=O(\alpha), \,\, \vQ^\infty=O(\alpha), \,\,\mbox{and}\,\, \delta^\infty=O(\alpha).
  \end{align*}
\end{corollary}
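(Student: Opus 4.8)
The plan is to work directly from the closed-form limits
\begin{align*}
q^\infty=(I-\mathcal{H}_{11})^{-1}u_q^\infty, \quad \vQ^\infty=(I_{nn_\xi^2}-\mathcal{H}_{22})^{-1}(\mathcal{H}_{21}q^\infty+u_Q^\infty),
\end{align*}
and track how each factor scales in $\alpha$. The essential difficulty is that both $I-\mathcal{H}_{11}$ and $I_{nn_\xi^2}-\mathcal{H}_{22}$ become \emph{singular} at $\alpha=0$: indeed $\mathcal{H}_{11}\to P_z^\tp\otimes W\otimes I_p$ and $\mathcal{H}_{22}\to P_z^\tp\otimes(W\otimes I_p)\otimes(W\otimes I_p)$, each of which has $1$ as a semisimple eigenvalue (of multiplicity $p$ and $p^2$ respectively, by Assumptions \ref{ass_W} and \ref{P}). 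Hence the two inverses blow up like $1/\alpha$, and a naive bound would only give $q^\infty=O(1)$. The resolution is a Laurent expansion of the matrix inverse around $\alpha=0$, combined with a cancellation coming from the defining equation $\bar A\theta^*+\bar{\mathbf{b}}=0$. I would first record the elementary order estimates: since $G_i=\alpha\,\vect(B_i+A_i\Theta^*)=O(\alpha)$, we have $u_q^\infty=\alpha w_q$ for an $\alpha$-independent $w_q$, $u_Q^\infty=O(\alpha^2)$, and $S_i=H_i\otimes G_i+G_i\otimes H_i=O(\alpha)$, so $\mathcal{H}_{21}=O(\alpha)$.

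Next I would expand $(I-\mathcal{H}_{11})^{-1}$. Writing $\mathcal{H}_{11}=K_0+\alpha K_1$ as in the eigenvalue analysis and reusing $X=p^\infty\otimes\mathbf{1}_M\otimes I_p$, $Y=\frac1M\mathbf{1}_n^\tp\otimes\mathbf{1}_M^\tp\otimes I_p$ with $YX=I_p$, $K_0X=X$, and $YK_1X=\bar A$, the eigenprojection onto the eigenvalue-$1$ subspace is $XY$, and projecting $I-\mathcal{H}_{11}$ onto it gives $-\alpha\bar A$ in the $X$-coordinates. Since $\bar A$ is Hurwitz hence invertible, the pole is simple and
\begin{align*}
(I-\mathcal{H}_{11})^{-1}=-\tfrac1\alpha X\bar A^{-1}Y+O(1).
\end{align*}
Therefore the leading term of $q^\infty=(I-\mathcal{H}_{11})^{-1}\alpha w_q$ is $-X\bar A^{-1}Yw_q$, so it suffices to prove $Yw_q=0$. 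A direct computation of the blocks $(w_q)_{j,m}=\sum_i p_{ij}p_i^\infty\bigl(b_m(i)+A_i\theta^*\bigr)$, using $\sum_j p_{ij}=1$, $\sum_i p_i^\infty b_m(i)=\bar b_m$, and $\sum_i p_i^\infty A_i=\bar A$, yields $Yw_q=\bar{\mathbf{b}}+\bar A\theta^*=0$. This cancellation gives $q^\infty=O(\alpha)$.

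For the covariance I would use the same Laurent structure for $\mathcal{H}_{22}$: the eigenvectors \eqref{eq:XY2} give $YK_1X=\bar A\otimes I_p+I_p\otimes\bar A$, a Kronecker sum that is invertible because $\bar A$ is Hurwitz, so $(I_{nn_\xi^2}-\mathcal{H}_{22})^{-1}=O(1/\alpha)$ with a simple pole. Crucially, here no cancellation is needed: by the order estimates together with $q^\infty=O(\alpha)$ we get $\mathcal{H}_{21}q^\infty+u_Q^\infty=O(\alpha)\cdot O(\alpha)+O(\alpha^2)=O(\alpha^2)$, and multiplying a simple $1/\alpha$ pole by $O(\alpha^2)$ yields $\vQ^\infty=O(\alpha)$. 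Finally $\delta^\infty=C_\delta\vQ^\infty=O(\alpha)$ follows immediately since $C_\delta$ is constant in $\alpha$.

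I expect the main obstacle to be the step for $q^\infty$: unlike $\vQ^\infty$, a crude order count there is off by exactly one power of $\alpha$, so one genuinely needs the simple-pole residue $-X\bar A^{-1}Y$ \emph{and} the cancellation $Yw_q=\bar A\theta^*+\bar{\mathbf{b}}=0$, which is where the definition of $\theta^*$ enters. A secondary technicality is justifying that the $O(\alpha^2)$ remainder in $\mathcal{H}_{22}=K_0+\alpha K_1+O(\alpha^2)$ does not perturb the leading residue of $(I_{nn_\xi^2}-\mathcal{H}_{22})^{-1}$; this is the content behind the phrase ``under mild technical conditions'' used in the eigenvalue analysis, and amounts to checking that the simple pole survives the higher-order perturbation.
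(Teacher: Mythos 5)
Your proposal is correct and follows essentially the same route as the paper: a first-order Laurent (simple-pole) expansion of $(I-\mathcal{H}_{11})^{-1}$ and $(I-\mathcal{H}_{22})^{-1}$ with exactly the paper's choices of null-space bases $X$, $Y$ (equivalently $U$, $V$), invertibility of $\bar A$ and of the Kronecker sum $\bar A\otimes I_p+I_p\otimes\bar A$ from the Hurwitz property, and the residue cancellation $\bar A\theta^*+\bar{\mathbf{b}}=0$ for the $q^\infty$ term. In fact you make explicit two steps the paper leaves to the reader --- the block computation showing $Yw_q=\bar A\theta^*+\bar{\mathbf{b}}=0$, and the observation that $\vQ^\infty$ needs no cancellation since $\mathcal{H}_{21}q^\infty+u_Q^\infty=O(\alpha^2)$ --- and you also carry the correct sign $-\tfrac{1}{\alpha}X\bar A^{-1}Y$ on the pole term (the paper's displayed expansion drops this minus sign, which is immaterial since the term vanishes).
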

\begin{proof}
We will use the following fact which can be viewed as a special case of \cite[Theorem 2.9]{Avrachenkov2013}.
\begin{fact}
Given a singular matrix $D_0$. let $U$ be a matrix whose columns form a basis of the null space of $D_0$. In addition,  let $V$ be a matrix whose columns form a basis for the null space of $D_0^\tp$. Suppose  the perturbed matrix $D_0+\alpha D_1$ is nonsingular for small $\alpha$.
If $V^\tp D_1 U$ is nonsingular, then $(D_0+\alpha D_1)^{-1}$  satisfies the first-order Laurent expansion 
$(D_0 + \alpha D_1)^{-1} = \frac{1}{\alpha} U(V^\tp D_1 U)^{-1}V^\tp +O(1)$.
\end{fact}
\vspace{0.05in}

First, we apply the Laurent expansion approach to analyze $q^\infty=(I-\mathcal{H}_{11})^{-1} u_q^\infty$.
 In this case, 
we choose $D_0$ and $D_1$~as
\begin{align*}
    D_0&=I_{nn_{\xi}}-P_z^\tp \otimes W\otimes I_p,\\
    D_1&=-(P_z^\tp\otimes I_{n_\xi})\diag(I_M\otimes A_i).
\end{align*}
Under Assumptions \ref{ass_W} \& \ref{P}, the null space of $D_0$ is the same as the eigenspace of $P_z^\tp \otimes W\otimes I_p$ for the eigenvalue $1$. Hence we choose $U=p^\infty\otimes \textbf{1}_M\otimes I_p$. Similarly, the null space of $D_0^\tp$ is characterized by $V=\frac{1}{M}\mathbf{1}_n\otimes \mathbf{1}_M\otimes I_p$
Then we have $V^\tp D_1 U=-\bar{A}$, which is nonsingular. Therefore, we have
\begin{align*}
    (I-\mathcal{H}_{11})^{-1}=\frac{1}{\alpha} U \bar{A}^{-1} V^\tp+O(1),
\end{align*}
Notice $G_i=O(\alpha)$ for all $i\in\mathcal{N}$. Hence we have $u_q^\infty=O(\alpha)$. This leads to the following result:
\begin{align*}
   q^\infty= (I-\mathcal{H}_{11})^{-1}u_q^\infty=\frac{1}{\alpha} U \bar{A}^{-1} V^\tp u_q^\infty+O(\alpha).
\end{align*}
Due to the fact that $\bar{A} \theta^* +  \bar{\mathbf{b}}= 0$, it is straightforward to verify $\frac{1}{\alpha} U \bar{A}^{-1} V^\tp u_q^\infty=0$. Hence we have $q^\infty=O(\alpha)$.

The Laurent expansion for $(I-\mathcal{H}_{22})^{-1}$ can be done in a similar way. We can choose $U=X$ and $V=Y^\tp$ where $(X,Y)$ is given by \eqref{eq:XY2}. Then it is not difficult to verify $\vQ^\infty=O(\alpha)$. Finally, we have $\delta^\infty=C_\delta \vQ^\infty=O(\alpha)$. This completes the proof.
\end{proof}
\begin{remark}
The connectedness of the underlying network is essential for our perturbation analysis. Clearly, the choices of $(U,V)$ (for the steady-state error analysis) or $(Y,X)$ (for the eigenvalue perturbation analysis) rely on the connectedness of $W$. However, our analysis does not make it explicit how the spectral gap of $W$ will affect the convergence rate. How to interpret our exact formula for $\delta^k$ in the large learning rate regime is not fully clear at this moment.
It may be interesting to investigate whether $\sigma(\mathcal{H}_{11})$ and $\sigma(\mathcal{H}_{22})$ yield simple upper bounds which have a more explicit  dependence on the spectral gap of $W$. That can potentially lead to some estimation error bounds which are easier to interpret and more consistent with the results in \cite{Doan2021}.
\end{remark}

\section{CONCLUSION}
In this paper, we applied the MJLS theory to study decentralized TD learning with linear function approximation. We present exact formulas for the mean-squared estimation errors of the decentralized TD(0) method, and discuss several implications on the algorithm behaviors.


\bibliographystyle{IEEEtran}
\bibliography{IEEEabrv,my_references}

\end{document}